\theoremstyle{plain}
\newtheorem{thm}{\protect\theoremname}
\theoremstyle{remark}
\newtheorem{rem}[thm]{\protect\remarkname}
\providecommand{\remarkname}{Remark}
\providecommand{\theoremname}{Theorem}
\title{\LARGE \bf
  Orbital Stabilization and Time Synchronization of \\
  Unstable Periodic Motions in Underactuated Robots
}
\author{
  Maksim Surov
  % Maksim Grigorov$^{2}$
  % \thanks{
  %   $^{1}$Maksim Surov is with Department of Information Technologies and AI, 
  %   Sirius University of Science and Technology, Sochi, Russia, 
  %   {\tt\small surov.m.o@gmail.com}
  % }
  % \thanks{
  %   $^{2}$Maksim Grigorov is with Department of Information Technologies and AI, 
  %   Sirius University of Science and Technology, Sochi, Russia, 
  %   {\tt\small mygrigorov@ya.ru}
  % }
}
\begin{document}
  \maketitle
  \thispagestyle{empty}
  \pagestyle{empty}

  \begin{abstract}
    This paper presents a control methodology for achieving orbital stabilization
    with simultaneous time synchronization of periodic trajectories in
    underactuated robotic systems. The proposed approach extends the classical
    transverse linearization framework to explicitly incorporate time-desynchronization
    dynamics. To stabilize the resulting extended transverse dynamics,
    we employ a combination of time-varying LQR and sliding-mode control.
    The theoretical results are validated experimentally through the implementation
    of both centralized and decentralized control strategies on a group
    of six Butterfly robots.
  \end{abstract}

  \section{INTRODUCTION}

    The problem of trajectory tracking for underactuated robots has been
    addressed in a series of publications~\cite{Shiriaev-2005,Shiriaev-2010,Canudas-2002,Manchester,Navarro-2024,Romero-2022,Saetre-2021}.
    Most of these works focus on designing control algorithms for orbital
    stabilization, where the system state converges to a reference periodic
    trajectory up to a phase shift. Formulating the control objective
    in this way has enabled the development of algorithms that have demonstrated
    effectiveness in real-world applications \cite{Buss-2016,Freidovich-1,Freidovich-2,Surov-2015}.
    However, in some practical scenarios, orbital asymptotic stability
    alone may be insufficient. For example, in cooperative or synchronized
    tasks involving multiple underactuated robots, it may be necessary
    to ensure asymptotic stability of the full state rather than only
    the orbit itself, particularly when the robots share the same clock.

    A straightforward method for tracking a reference trajectory is based
    on linearization of the tracking error dynamics, followed by the design
    of an LQR for the resulting linear time-varying (LTV) system. This
    method is described in Chapter 12 of~\cite{Khalil} and has been
    shown to achieve asymptotic stability for small tracking errors, as
    demonstrated in experiments with a triple pendulum on a cart~\cite{Gluck-2013}.
    Compared to orbital stabilization methods, this approach is sensitive
    to initial time shifts, and the control system may lose stability
    if the robot becomes desynchronized.

    Alternative approaches for synchronization of closed orbits in underactuated
    robots involve modifications of orbital tracking algorithms to ensure
    synchronization between robots in a group~\cite{Shiriaev-2010,Chen-2015,Surov-2024,Xiang-2012,Jankuloski-2012}.
    For example, in~\cite{Shiriaev-2010}, the authors employ transverse
    linearization of the dynamics of a group of three robots to design
    a centralized control law that achieves synchronization. In~\cite{Jankuloski-2012}
    orbital stabilization together with synchronization is attained using
    the dynamic virtual holonomic constraints approach. In~\cite{Surov-2024},
    the authors propose an ad-hoc modification of the transverse--linearization
    approach, and demonstrate its effectiveness experimentally on the
    synchronization of two real robots.

    Our method for orbital stabilization with simultaneous time synchronization
    also represents a modification of orbital stabiliation feedback. It
    builds on the transverse linearization framework~\cite{Banaszuk,Shiriaev-2010,Shiriaev-2005}.
    For a given periodic trajectory, we augment the transverse dynamics
    with the dynamics of robot desynchronization, defined as the difference
    between the physical time and the reference time corresponding to
    the ``closest'' point on the trajectory. As we show, the linearization
    of this extended transverse dynamics takes the form of an LTV system,
    which can be stabilized using a combination of LQR and sliding-mode
    control, similarly to~\cite{Saetre-2021}. The resulting feedback
    law naturally decomposes into an orbital stabilization component and
    a synchronization component. The synchronization term is bounded,
    with the desynchronization variable entering through the signum function.
    This structure allows the control law to preserve the benefits of
    orbital stabilization while providing bounded corrective actions,
    even for large desynchronizations. 

    The remainder of the paper is organized as follows. Section~\ref{sec:Problem-Statement}
    formulates the problem of periodic trajectory tracking for a class
    of nonlinear control systems. Section~\ref{sec:Transverse-Linearization-Approac}
    briefly reviews the orbital-stabilization algorithm based on the transverse
    linearization approach. The main results are presented in Section~\ref{sec:Time-Synchronization},
    where the transverse dynamics are extended with a desynchronization
    variable and two methods for extended dynamics stabilization are proposed.
    The first method applies an LQR design to linearization of the extended
    transverse dynamics, while the second employs a sliding-mode control
    methodology. Section~\ref{sec:Experimental-Result} reports experimental
    results obtained on a group of six butterfly robots~\cite{Lynch-1998}.
    Concluding remarks are provided in Section~\ref{sec:Concluding-Remarks}.

  \section{PROBLEM STATEMENT}
  \label{sec:Problem-Statement}

    We consider a class of nonlinear control-affine systems of the form
    \begin{equation}
    \dot{x}=f\left(x\right)+g\left(x\right)u,\label{eq:affine-system}
    \end{equation}
    where $x\in\mathcal{X}$ is the system state and $u\in\mathbb{R}^{m}$
    is the control input. The state space $\mathcal{X}$ is a smooth $n$-dimensional
    manifold. For mechanical system $\mathcal{X}$ typically equals to
    the tangent bundle $TQ$ of the configuration manifold $Q$. The functions
    $f\left(x\right),$ $g\left(x\right)$ are assumed to be continuously
    differentiable on $\mathcal{X}$. 

    The system~(\ref{eq:affine-system}) is supposed to admit a smooth,
    nontrivial, $T$-periodic solution 
    \[
      x_{*}\left(t\right)=x_{*}\left(t+T\right),\quad u_{*}\left(t\right)=u_{*}\left(t+T\right)\quad\forall t\in\mathbb{R}
    \]
    with period $T>0$, satisfying the equation
    \[
      \frac{d}{dt}x_{*}\left(t\right)\equiv f\left(x_{*}\left(t\right)\right)+g\left(x_{*}\left(t\right)\right)u_{*}\left(t\right).
    \]
    The trajectory $x_{*}\left(t\right)$ is assumed to be free of equilibrium
    points ($\dot{x}_{*}\left(t\right)\ne0$ for all $t$) and self-intersections.
    The set of phase-space points associated with the trajectory is referred
    to as the orbit~\cite{Khalil} and is defined as
    \[
      \gamma\equiv\left\{ x\in\mathcal{X}\mid\exists t:x=x_{*}\left(t\right)\right\} .
    \]
    Under mild regularity assumptions, $\gamma$ forms a smooth embedded
    submanifold of $\mathcal{X}$. For sufficiently small $\epsilon>0$,
    we define the tubular neighborhood of the orbit by
    \[
      U_{\epsilon}\equiv\left\{ x\in\mathcal{X}\mid\mathrm{dist}\left(x,\gamma\right)<\epsilon\right\} ,
    \]
    where $\mathrm{dist}$ denotes the distance operator on $\mathcal{X}$.

    In the context of underactuated robots, the state $x$ often consists
    of generalized coordinates and velocities, $x\equiv\left(q;\dot{q}\right)\in TQ$,
    while the input $u$ represents generalized forces, with $1\leq\dim u<\dim q$.
    The periodic trajectory $x_{*}\left(t\right)$ thus describes a repetitive
    motion of the robot.

    A fundamental problem in robotics, as discussed in~\cite{Lefeber-2000,Siciliano-2008},
    is to design a feedback law $u=u\left(t,x\right)$, such that the
    reference trajectory $x_{*}\left(t\right)$ becomes a stable solution
    of the closed-loop system
    \begin{equation}
      \dot{x}=f\left(x\right)+g\left(x\right)u\left(t,x\right).\label{eq:closed-loop-system}
    \end{equation}
    Depending on the application, different notions of stability may be
    required. In the control of underactuated robots -- such as bipedal
    walking mechanisms~\cite{Westervelt-2007,Canudas-2002} -- the most
    common requirement is local asymptotic orbital stability, meaning
    that trajectories initiated within $U_{\epsilon}$ asymptotically
    converge to the orbit $\gamma$. In theory, a trajectory $x_{1}\left(t\right)$
    converges to the reference trajectory $x_{*}\left(t\right)$ up to
    a constant phase shift $h$, i.e. $x_{1}\left(t\right)-x_{*}\left(t-h\right)\to0$
    as $t\to\infty$. In practice, however, due to unmodeled dynamics
    and disturbances, the solutions $x_{1}\left(t\right)$ and $x_{*}\left(t\right)$
    may drift apart while $x_{1}\left(t\right)$ remains within $U_{\epsilon}$. 

    Another class of control problems seeks to construct a feedback law
    that guarantees asymptotic convergence of the tracking error $x-x_{*}\left(t\right)\to0$
    as $t\to\infty$. While this formulation eliminates phase desynchronization,
    there are relatively few ready-to-use algorithms available~\cite{Gluck-2013}.
    Moreover, existing methods that provide asymptotic tracking stability
    are often less robust than those designed for asymptotic orbital stabilization.

    To address these limitations, we propose a control method that achieves
    asymptotic stability by modifying an existing orbital stabilization
    algorithm based on the transverse linearization approach. For completeness
    of the exposition, we briefly recall the main steps of the transverse
    linearization framework before presenting our modification.

  \section{TRANSVERSE LINEARIZATION APPROACH}
  \label{sec:Transverse-Linearization-Approac}

  \subsection{Dynamics in Transverse Coordinates}

    One well-known approach to orbital stabilization is based on transverse
    linearization~\cite{Banaszuk,Shiriaev-2005,Shiriaev-2010,Surov-2020}.
    For completeness, we briefly recall the main steps of this approach.
    The key idea is to introduce a local coordinate transformation in
    a neighborhood of the reference periodic orbit. This transformation
    defines new coordinates
    \begin{align}
    \theta & =\pi\left(x\right)\in\left[0,T_{\theta}\right)\quad\text{and}\quad\xi=\alpha\left(x\right)\in\mathbb{R}^{n-1},\label{eq:transverse-coordinates}
    \end{align}
    where $\theta$ represents the phase obtained by projecting the point
    $x$ onto the orbit $\gamma$, and $\xi$ collects $n-1$ transverse
    deviations from $\gamma$. The transformation~(\ref{eq:transverse-coordinates})
    is assumed to be a diffeomorphism within $U_{\epsilon}$, which in
    particular postulates the existence of the inverse mapping
    \[
    x=\beta\left(\xi,\theta\right).
    \]
    The projection operator $\pi$ is assumed to satisfy the monotonicity
    condition 
    \begin{equation}
    \frac{d}{dt}\pi\left(x_{*}\left(t\right)\right)=\frac{\partial\pi\left(x_{*}\left(t\right)\right)}{\partial x}\dot{x}_{*}\left(t\right)\geq\mathrm{const}>0\label{eq:monotonicity-of-theta}
    \end{equation}
    for all $t$. This condition implies that $\theta_{*}\left(t\right)\equiv\pi\left(x_{*}\left(t\right)\right)$
    is strictly increasing for $t\in\left[0,T\right)$ and therefore can
    be extended continuously beyond one period as $\theta_{*}\left(t+kT\right)\equiv\theta_{*}\left(t\right)+kT_{\theta}$
    for all $t\in\left[0,T\right)$ and $k\in\mathbb{Z}$. Continuity
    and monotonicity of $\theta_{*}\left(t\right)$ guarantee the existence
    of the inverse mapping $t=\theta_{*}^{-1}\left(\theta\right)$, which
    allows alternative parametrization of the reference signals
    \begin{align*}
    x_{*}\left(\theta\right) & \equiv x_{*}\left(\theta_{*}^{-1}\left(\theta\right)\right),\quad u_{*}\left(\theta\right)\equiv u_{*}\left(\theta_{*}^{-1}\left(\theta\right)\right).
    \end{align*}
    Following standard abuse of notation in the mathematical literature,
    we use the same symbol $x_{*}$ for both the original function $x_{*}\left(t\right)$
    and its reparametrization $x_{*}\left(\theta\right)\equiv\left(x_{*}\circ\theta_{*}^{-1}\right)\left(\theta\right)$.
    In this parametrization, $x_{*}\left(\theta\right)$ represents a
    smooth curve in $\mathcal{X}$, and the operator $\pi\left(x\right)$
    projects a point $x\in U_{\epsilon}$ onto the curve and yields the
    corresponding parameter value $\theta$. Then, the original system~(\ref{eq:affine-system})
    can be expressed in the new coordinates $\theta,\xi$ as 
    \begin{align}
    \dot{\xi} & =f_{\xi}\left(\xi,\theta\right)+g_{\xi}\left(\xi,\theta\right)w\label{eq:dot_xi}\\
    \dot{\theta} & =f_{\theta}\left(\xi,\theta\right)+g_{\theta}\left(\xi,\theta\right)w\label{eq:dot_theta}
    \end{align}
    where the new control input is defined by 
    \[
    w\equiv u-u_{*}\left(\theta\right),
    \]
    and the functions $f_{\xi}\left(\cdot\right),f_{\theta}\left(\cdot\right),g_{\xi}\left(\cdot\right),g_{\theta}\left(\cdot\right)$
    are given by
    \begin{align}
    \left(\begin{array}{c}
    f_{\xi}\left(\xi,\theta\right)\\
    f_{\theta}\left(\xi,\theta\right)
    \end{array}\right) & \equiv\left(\frac{\partial\beta\left(\xi,\theta\right)}{\partial\left(\xi,\theta\right)}\right)^{-1}f\left(\beta\left(\xi,\theta\right)\right)\label{eq:new_f_g}\\
    & \quad+\left(\frac{\partial\beta\left(\xi,\theta\right)}{\partial\left(\xi,\theta\right)}\right)^{-1}g\left(\beta\left(\xi,\theta\right)\right)u_{*}\left(\theta\right),\nonumber \\
    \left(\begin{array}{c}
    g_{\xi}\left(\xi,\theta\right)\\
    g_{\theta}\left(\xi,\theta\right)
    \end{array}\right) & \equiv\left(\frac{\partial\beta\left(\xi,\theta\right)}{\partial\left(\xi,\theta\right)}\right)^{-1}g\left(\beta\left(\xi,\theta\right)\right).\nonumber 
    \end{align}
    We notice, that the jacobian matrix $\frac{\partial\beta\left(\xi,\theta\right)}{\partial\left(\xi,\theta\right)}$
    of the inverse transform $x=\beta\left(\xi,\theta\right)$ is invertible
    for all sufficiently small $\xi$, due to the presence of local diffeomorphism
    $\left(\xi,\theta\right)\simeq x$. 

    Dividing the right-hand side of~(\ref{eq:dot_xi}) by that of (\ref{eq:dot_theta}),
    yields the transverse dynamics in a general nonlinear form:
    \begin{align}
    \frac{d\xi}{d\theta} & =\frac{f_{\xi}\left(\xi,\theta\right)+g_{\xi}\left(\xi,\theta\right)w}{f_{\theta}\left(\xi,\theta\right)+g_{\theta}\left(\xi,\theta\right)w}\equiv\phi\left(\xi,\theta,w\right).\label{eq:transverse-dynamics-nonlin}
    \end{align}
    The denominator in~(\ref{eq:transverse-dynamics-nonlin}) is strictly
    positive for sufficiently small $\xi$ and $w$ due to the monotonicity
    condition~(\ref{eq:monotonicity-of-theta}). Consequently, within
    a sufficiently small tube $U_{\epsilon}$, the dynamics (\ref{eq:transverse-dynamics-nonlin})
    are well defined and equivalent to the original system~(\ref{eq:affine-system}).
    Thus, the problem of stabilizing the orbit $\gamma$ is transformed
    into the problem of stabilizing the trivial solution $\xi=0$ of the
    transverse dynamics~(\ref{eq:transverse-dynamics-nonlin}). 

  \subsection{Orbital Stabilization Control Design}
  \label{sub:orbital_stabilization_control_design}

    A common approach for stabilizing~(\ref{eq:transverse-dynamics-nonlin})
    is to linearize the right-hand side with respect to $\xi$ and $w$:
    \begin{equation}
    \frac{d\xi}{d\theta}=A\left(\theta\right)\xi+B\left(\theta\right)w+o\left(\theta,\xi,w\right),\label{eq:transverse-linearization}
    \end{equation}
    where the matrix functions
    \begin{align*}
    A\left(\theta\right) & \equiv\left.\frac{\partial\phi\left(\xi,\theta,w\right)}{\partial\xi}\right|_{\xi=0,w=0}\in\mathbb{R}^{\left(n-1\right)\times\left(n-1\right)}\quad\text{and}\\
    B\left(\theta\right) & \equiv\left.\frac{\partial\phi\left(\xi,\theta,w\right)}{\partial w}\right|_{\xi=0,w=0}\in\mathbb{R}^{\left(n-1\right)\times m}
    \end{align*}
    are $T_{\theta}$-periodic, and $o\left(\xi,\theta,w\right)$ collects
    bilinear, quadratic, and higher-order terms in $\xi$ and $w$. As
    shown in Theorem~2 in~\cite{Yakubovich-1986}, controllability over
    period of an LTV system with periodic coefficients
    \[
    \frac{d\bar{\xi}}{d\theta}=A\left(\theta\right)\bar{\xi}+B\left(\theta\right)w
    \]
    implies the existence of an exponentially stabilizing feedback $w\left(\bar{\xi},\theta\right)=K\left(\theta\right)\bar{\xi}$
    with a $T_{\theta}$-periodic gain matrix $K\left(\theta\right)$.
    This feedback $w\left(\bar{\xi},\theta\right)$ allows construction
    of the state feedback for the original nonlinear system~(\ref{eq:affine-system})
    as
    \begin{equation}
    u\left(x\right)=u_{*}\left(\pi\left(x\right)\right)+K\left(\pi\left(x\right)\right)\alpha\left(x\right).\label{eq:transverse-feedback}
    \end{equation}

    Using aruments in~\cite{Leonov-2006}, one can conclude that the
    feedback law~(\ref{eq:transverse-feedback}) guarantees local asymptotic
    orbital stability of the trajectory $x_{*}\left(t\right)$. However,
    it does not ensure asymptotic stability of the trajectory itself.

  \section{TIME SYNCHRONIZATION ALGORITHM}
  \label{sec:Time-Synchronization}

  \subsection{Extended Transverse Dynamics}

    Below, we consider an extension of the transverse dynamics~(\ref{eq:transverse-dynamics-nonlin})
    that incorporates the dynamics of the robot’s time desynchronization,
    expressed in terms of the variables $\xi,$ and $\theta$.

    Let $x\in U_{\epsilon}$ be the state of the system at time $t$.
    Using the projection operator $\pi\left(x\right)$ and the alternative
    parametrization $\theta_{*}\left(t\right)$, the robot's self-time
    is defined as
    \begin{equation}
    \tau\equiv\theta_{*}^{-1}\left(\pi\left(x\right)\right)+\nu T,\label{eq:tau-def}
    \end{equation}
    where $\nu\in\mathbb{N}$ counts the total number of periods completed
    by the robot. The controller is assumed to maintain an internal state
    that tracks and updates $\nu$. The robot lateness is then defined
    as
    \begin{equation}
    h\equiv t-\tau.\label{eq:h-def}
    \end{equation}
    We note that, while $t$ is ordinarily considered the physical (absolute)
    time, after rewriting the dynamics in terms of the variable $\theta$,
    $t$ effectively becomes a dynamic variable. Its evolution along a
    solution $\left(\xi\left(\theta\right),w\left(\theta\right)\right)$
    of the dynamical system~(\ref{eq:transverse-dynamics-nonlin}) is
    determined via~(\ref{eq:dot_theta}) as
    \begin{equation}
    t=t_{0}+\int_{\theta_{0}}^{\theta}\frac{1}{f_{\theta}\left(\xi\left(\varepsilon\right),\varepsilon\right)+g_{\theta}\left(\xi\left(\varepsilon\right),\varepsilon\right)w\left(\varepsilon\right)}d\varepsilon,\label{eq:t-integral}
    \end{equation}
    so that the dynamics of $t$ now depends on the behavior of new variables. 

    Using the expressions in~(\ref{eq:dot_theta},\ref{eq:transverse-dynamics-nonlin},\ref{eq:tau-def},\ref{eq:h-def}),
    the extended transverse dynamics can be written explicitly in terms
    of the variables $\xi,\theta$ and $w$ as
    \begin{align}
    \frac{d\xi}{d\theta} & =\frac{f_{\xi}\left(\xi,\theta\right)+g_{\xi}\left(\xi,\theta\right)w}{f_{\theta}\left(\xi,\theta\right)+g_{\theta}\left(\xi,\theta\right)w}\label{eq:aug-nonlin-transdyn}\\
    \frac{dh}{d\theta} & =\frac{1}{f_{\theta}\left(\xi,\theta\right)+g_{\theta}\left(\xi,\theta\right)w}-\frac{1}{\theta'_{*}\left(\theta_{*}^{-1}\left(\theta\right)\right)},\nonumber 
    \end{align}
    where the notation $\theta'_{*}\left(\tau\right)\equiv\frac{d\theta{}_{*}\left(\tau\right)}{d\tau}$
    used. It is straighforward to verify, that the system~(\ref{eq:aug-nonlin-transdyn})
    possesses the trivial solution $\xi=0,$ $h=0,$ $w=0,$ which corresponds
    to the referece solution $\left(x_{*}\left(t\right),u_{*}\left(t\right)\right)$
    of the original nonlinear system~(\ref{eq:affine-system}). Then,
    the control objective can be formulated as follows: design a feedback
    law $w\left(\xi,h,\theta\right)$ for the system~(\ref{eq:aug-nonlin-transdyn}),
    which asymptotically stabilizes the trivial solution $\xi=0,h=0$.

  \subsection{LQR Approach for Controller Design}

    As seen, the system~(\ref{eq:aug-nonlin-transdyn}) represents a
    general nonlinear time-varying system. A common first step toward
    stabilizing its trivial solution is to design a feedback controller
    for the linearization of the system, which then serves as a basis
    for handling the nonlinear dynamics~\cite{Khalil}. The approximate
    linearization of the right-hand side of~(\ref{eq:aug-nonlin-transdyn})
    yields the LTV system
    \begin{align}
    \frac{d\xi}{d\theta} & =A_{\xi}\left(\theta\right)\xi+B_{\xi}\left(\theta\right)w+o_{\xi}\left(\theta,\xi,w\right)\label{eq:aug-lin-transdyn}\\
    \frac{dh}{d\theta} & =A_{h}\left(\theta\right)\xi+B_{h}\left(\theta\right)w+o_{w}\left(\theta,\xi,w\right)\nonumber 
    \end{align}
    with $T_{\theta}$-periodic matrices $A_{\xi}\left(\theta\right),A_{h}\left(\theta\right),B_{\xi}\left(\theta\right),B_{h}\left(\theta\right)$.
    The functions $o_{\xi}\left(\theta,\xi,w\right)$ and $o_{w}\left(\theta,\xi,w\right)$
    collect bilinear, quadratic and higher order terms in $\xi$ and $w$.
    Controllability over one period of this LTV system, as discussed in~\cite{Yakubovich-1986},
    guarantees the existence of an exponentially stabilizing feedback
    of the form $w\left(\theta,\xi,h\right)=K_{\xi}\left(\theta\right)\xi+K_{h}\left(\theta\right)h$
    with periodic gains $K_{\xi}\left(\theta\right)$ and $K_{h}\left(\theta\right)$,
    which can be computed using standard numerical methods~\cite{Gusev-2010}.
    This feedback can then be translated into a control law for the original
    nonlinear system~(\ref{eq:affine-system}) as
    \begin{align}
    u\left(t,x\right)= & u_{*}\left(\theta\right)+K_{\xi}\left(\theta\right)\alpha\left(x\right)+K_{h}\left(\theta\right)\left(t-\tau\right),\label{eq:timesync-lqr}
    \end{align}
    where $\theta$ and $\tau$ are computed from $x$ as described above,
    and $t$ denotes the physical time. 

    A complete proof of asymptotic stability of the reference trajectory
    $x_{*}\left(t\right)$ for the closed loop nonlinear system is beyond
    the scope of the present publication. Instead, we refer to the standard
    arguments presented in Theorem 4.13 of~\cite{Khalil}.

    It is important to note, that the control law~(\ref{eq:timesync-lqr})
    guarantees local asymptotic stability only. Specifically, for any
    initial state $x_{0}$ within a small tubular neighborhood $U_{\epsilon}$
    and with a small initial time deviation $\left|t_{0}-\theta_{*}^{-1}\left(\pi\left(x_{0}\right)\right)\right|$
    the corresponding solution of the closed-loop system asymptotically
    converges to $x_{*}\left(t\right)$. For larger initial time deviations,
    stability may be compromised even when $x_{0}\in\gamma$. Consequently,
    strict time synchronization can potentially reduce the robustness
    advantages of orbital stabilization. To address this issue, the control
    law can be modified by saturating the lateness term:
    \begin{align*}
    u & =u_{*}\left(\theta\right)+K_{\xi}\left(\theta\right)\alpha\left(x\right)+K_{h}\left(\theta\right)\mathrm{sat}\left(t-\tau,-h_{\max},h_{\max}\right),
    \end{align*}
    where $\mathrm{sat}\left(x,x_{\min},x_{\max}\right)$ denotes the saturation of $x$ to the range $[x_{\min},x_{\max}]$.
    While this modification is observed to improve practical performance,
    it may, in theory, compromise the stability of the closed-loop system
    and thus warrants fundamental investigation. We therefore propose
    an alternative control scheme based on sliding-mode control, where
    the synchronization term remains bounded regardless of the magnitude
    of the lateness $h$.

  \subsection{Sliding Mode Approach for Controller Design}

    \label{sec:Sliding-Mode-Approach}

    The method developed below is based on designing a sliding-mode feedback
    in which the lateness variable $h$ enters the argument of the signum
    function. In this way, the controller accelerates the motion whenever
    the system lags (regardless of the magnitude of the delay) and decelerates
    otherwise. Although the design still relies on the analysis of the
    linearized transverse dynamics~(\ref{eq:aug-lin-transdyn}), it adopts
    the alternative stabilization approach for LTV systems proposed in~\cite{Saetre-2021}.

    For clarity, the following theorem states the sliding-mode control
    law for the case of a scalar input $u\in\mathbb{R}$. The multi-input
    version can be formulated analogously, but its full statement and
    proof are omitted here for brevity.
    \begin{thm}
    \label{thm:sliding-control} Consider the LTV system 
    \begin{align}
    \dot{x} & =A_{x}\left(t\right)x+B_{x}\left(t\right)u\label{eq:extended-ltv-dynamics}\\
    \dot{z} & =A_{z}\left(t\right)x+B_{z}\left(t\right)u\nonumber 
    \end{align}
    where $x\in\mathbb{R}^{n},z\in\mathbb{R}$ are state variables and
    $u\in\mathbb{R}$ is the control input. Assume the following:
    \begin{enumerate}
      \item The matrix functions $A_{x}\left(t\right),$ $A_{z}\left(t\right),$
      $B_{x}\left(t\right),$ $B_{z}\left(t\right)$ are continuous and
      $T$-periodic, $T>0$.
      \item The homogeneous subsystem $\dot{x}=A_{x}\left(t\right)x$ is uniformly
      exponentially stable.
    \end{enumerate}
    Let $n\left(t\right)\in\mathbb{R}^{n}$ denote the unique $T$-periodic
    solution of the equation
    \[
    \dot{n}=-A_{x}^{\top}\left(t\right)n-A_{z}^{\top}\left(t\right)
    \]
    and suppose that the function 
    \[
    b\left(t\right)\equiv B_{x}^{\top}\left(t\right)n\left(t\right)+B_{z}\left(t\right)\in\mathbb{R}
    \]
    satisfies
    \[
    \left|b\left(t\right)\right|\geq b_{\min}>0\quad\text{for all}\quad t.
    \]
    Then, for any $k>0$, the feedback law
    \begin{equation}
    u\left(t,x,z\right)=-\frac{k}{b\left(t\right)}\mathrm{sign}\left(n^{\top}\left(t\right)x+z\right)\label{eq:sliding-control}
    \end{equation}
    renders the equilibrium $x=0,z=0$ of~(\ref{eq:extended-ltv-dynamics})
    globally asymptotically stable.

    \end{thm}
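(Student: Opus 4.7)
The heart of the proof is the observation that the specific choice of $n(t)$ is engineered so that the sliding variable $s(t,x,z) \equiv n^{\top}(t)x + z$ has dynamics that do not depend on $x$. I would first compute
\[
\dot{s} = \dot{n}^{\top}x + n^{\top}\dot{x} + \dot{z}
       = \bigl(-A_x^{\top}n - A_z^{\top}\bigr)^{\top}x + n^{\top}(A_x x + B_x u) + A_z x + B_z u,
\]
and verify that the $x$-dependent terms cancel exactly, leaving
\[
\dot{s} = \bigl(B_x^{\top}(t)n(t) + B_z(t)\bigr) u = b(t)\,u.
\]
Substituting the feedback~(\ref{eq:sliding-control}) then yields $\dot{s} = -k\,\mathrm{sign}(s)$, independently of $b(t)$; note that the hypothesis $|b(t)| \geq b_{\min} > 0$ guarantees that the feedback is well defined for all $t$.

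Second, I would exploit this closed-loop scalar equation to show that the sliding surface $\{s = 0\}$ is reached in finite time. A standard Lyapunov analysis with $V(s) = |s|$ (interpreted in the Filippov sense) gives $\dot{V} = -k$ as long as $s \neq 0$, so $|s(t)| = \max\{|s(0)| - kt,\, 0\}$, and $s$ vanishes no later than $t^{*} = |s(0)|/k$. For all $t \geq t^{*}$ the trajectory stays on the surface (chattering argument: the feedback induces an infinite switching rate that maintains $s \equiv 0$).

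Third, I would analyze the reduced dynamics on the sliding manifold. On $\{s = 0\}$ the equivalent control derived from $\dot{s} \equiv 0$ satisfies $b(t)\,u_{\mathrm{eq}} = 0$, hence $u_{\mathrm{eq}} \equiv 0$, and the constraint $z = -n^{\top}(t)x$ allows one to reduce the dynamics to $\dot{x} = A_x(t)x$. By assumption~2, this system is uniformly exponentially stable, so $x(t) \to 0$ exponentially. Since $n(t)$ is continuous and $T$-periodic, it is bounded, and $z(t) = -n^{\top}(t)x(t) \to 0$ as well. Combining the finite-time reaching phase with exponential convergence on the manifold yields global asymptotic stability of the origin.

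The main conceptual obstacle, which I would address as a preliminary lemma, is to justify the existence and uniqueness of the $T$-periodic $n(t)$. For this I would use a Floquet-type argument: the monodromy matrix of the adjoint homogeneous system $\dot{n} = -A_x^{\top}(t)n$ equals $\Phi_x^{-\top}(T,0)$, where $\Phi_x$ is the transition matrix of the (exponentially stable) system in assumption~2. Hence its spectrum lies strictly outside the unit disk; in particular $1$ is not an eigenvalue, so $I - \Phi_n(T,0)$ is invertible and the forced equation admits exactly one $T$-periodic solution. The other technical delicacy is the rigorous handling of the discontinuous right-hand side during the reaching phase; I would invoke the standard Filippov / sliding-mode framework to make the finite-time convergence and invariance of $\{s=0\}$ precise rather than re-deriving it.
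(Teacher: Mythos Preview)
Your proof is correct and follows essentially the same route as the paper: define the sliding variable $s = n^{\top}(t)x + z$, show $\dot s = -k\,\mathrm{sign}(s)$ so that $s$ reaches zero in time at most $|s(0)|/k$, then observe that on the surface the equivalent control is $u_{\mathrm{eq}}=0$, leaving the exponentially stable reduced dynamics $\dot x = A_x(t)x$ with $z = -n^{\top}(t)x$. Your treatment is in fact slightly more careful than the paper's, since you explicitly verify the cancellation of the $x$-terms in $\dot s$, justify the existence and uniqueness of the periodic $n(t)$ via the adjoint monodromy argument, and flag the Filippov interpretation of the discontinuous closed loop---all points the paper leaves implicit.
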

    \begin{proof}
    The control law~(\ref{eq:sliding-control}) is a sliding-mode control
    defined on the time-varying sliding surface 
    \[
    s\left(t,x,z\right)=n^{\top}\left(t\right)x+z=0
    \]
    in the extended phase space $\left(t,x,z\right)$. Differentiating
    $s$ along system trajectories gives
    \[
    \dot{s}=\dot{n}^{\top}\left(t\right)x+n^{\top}\left(t\right)\dot{x}+\dot{z}.
    \]
    Under the control~(\ref{eq:sliding-control}), this becomes 
    \[
    \dot{s}=-k\cdot\mathrm{sign}s,
    \]
    so $s\left(t\right)$ reaches zero at finite time (at most $\frac{\left|s\left(0\right)\right|}{k}$).
    On the surface $s\left(t,x,z\right)=0$ the equivalent control~\cite{Utkin}
    is obtained from 
    \[
    \dot{s}=\frac{\partial s\left(t,x,z\right)}{\partial t}+\frac{\partial s\left(t,x,z\right)}{\partial x}\dot{x}+\frac{\partial s\left(t,x,z\right)}{\partial z}\dot{z}=0,
    \]
    which yields $u_{eq}=0$ whenever $b\left(t\right)\ne0$. Substituting
    the $u_{eq}$ into~(\ref{eq:extended-ltv-dynamics}) gives the reduced
    dynamics
    \begin{align*}
    \dot{x} & =A_{x}\left(t\right)x,\quad z=n^{\top}\left(t\right)x,
    \end{align*}
    which is uniformly exponentially stable by assumption~2. Because
    $n\left(t\right)$ is bounded, continuous and $T$-periodic, $z\left(t\right)=n^{\top}\left(t\right)x\left(t\right)\to0$
    whenever $x\left(t\right)\to0$. Finite--time convergence of $s$
    together with exponential stability of the in-manifold dynamics implies
    global asymptotic stability of the equilibrium $x=0,z=0$. 
    \end{proof}
    \begin{rem}
    While $s\ne0$, the $x$-subsystem is subjected to the periodic disturbance
    $\pm\frac{kB_{x}\left(t\right)}{b\left(t\right)}.$ Because the unforced
    system $\dot{x}=A_{x}\left(t\right)x$ is uniformly exponentially
    stable, the forced system converges to a bounded steady state whose
    amplitude of order $O(k)$. Consequently, for every sufficiently small
    $k>0$ and any prescribed $\epsilon>0$, there exists $\delta>0$
    such that every trajectory of the original nonlinear closed-loop system
    starting within $U_{\delta}$ remains inside $U_{\epsilon}$ for all
    $t\geq0$. This observation indicates that the closed loop nonlinear
    system stays close to the orbit $\gamma$ while synchronization term
    is converging.
    \end{rem}
    As can be seen, Theorem~\ref{thm:sliding-control} imposes relatively
    strong conditions on the LTV system. In particular, the dynamics~(\ref{eq:aug-lin-transdyn})
    generally do not satisfy the stability assumption for the $\xi$-subsystem.
    Consequently, Theorem~\ref{thm:sliding-control} cannot be applied
    directly to~(\ref{eq:aug-lin-transdyn}). To address this limitation,
    we propose the following algorithm.
    % \begin{enumerate}[leftmargin=0.5cm]
    \begin{enumerate}
      \item Repeat the steps of construction orbitally stabilizing feedback
        mentioned in Section~\ref{sub:orbital_stabilization_control_design}.
        This yields gain matrix $K(\theta)$, which stabilizes $\xi$-subsystem. Then, the control $w$
        for the system~(\ref{eq:aug-lin-transdyn}) is decomposed as 
        \[
          w = K\left(\theta\right)\xi+w_{s},
        \]
        where the term $w_{s}$ is a new control input dedicated to stabilizing the lateness variable
        $h$. Under this control, the system~(\ref{eq:aug-lin-transdyn})
        becomes
        \begin{align}
          \frac{d\xi}{d\theta} & =\left(A_{\xi}\left(\theta\right)+B_{\xi}\left(\theta\right)K\left(\theta\right)\right)\xi+B_{\xi}\left(\theta\right)w_{s}\label{eq:control-transform-2}\\
          \frac{dh}{d\theta} & =\left(A_{h}\left(\theta\right)+B_{h}\left(\theta\right)K\left(\theta\right)\right)\xi+B_{h}\left(\theta\right)w_{s}.\nonumber 
        \end{align}

      \item Define $n\left(\theta\right)$ as the unique $T_{\theta}$-periodic
        solution of
        \begin{align}
          \frac{dn}{d\theta} & =-\left(A_{\xi}\left(\theta\right)+B_{\xi}\left(\theta\right)K\left(\theta\right)\right)^{\top}n\nonumber \\
          & \quad-\left(A_{h}\left(\theta\right)+B_{h}\left(\theta\right)K\left(\theta\right)\right)^{\top},\label{eq:sliding-surf-vec}
        \end{align}
        and verify that $n^{\top}\left(\theta\right)B_{\xi}\left(\theta\right)+B_{h}\left(\theta\right)\ne0$
        for all $\theta$.

      \item Design the control $w_s$ applying Theorem~\ref{thm:sliding-control}
        to~(\ref{eq:control-transform-2}). 
        Then, the resulting control for system~(\ref{eq:affine-system}) becomes
        \begin{equation}
          u\left(t,x\right)=u_{*}\left(\theta\right)+K\left(\theta\right)\xi-\frac{k\cdot\mathrm{sign}\left(n^{\top}\left(\theta\right)\xi+h\right)}{n^{\top}\left(\theta\right)B_{\xi}\left(\theta\right)+B_{h}\left(\theta\right)},\label{eq:sliding-nonlin-control}
        \end{equation}
        where $\xi,\theta,h$ are computed from $x$ and $t$ as described above. 
    \end{enumerate}
    As evident, the feedback~(\ref{eq:sliding-nonlin-control}) naturally
    decomposes into three components: the feedforward term $u_{*}\left(\theta\right)$,
    the orbital stabilization term $K\left(\theta\right)\xi$ and the
    time synchronization term $-\frac{k\cdot\mathrm{sign}\left(n^{\top}\xi+h\right)}{n^{\top}B_{\xi}+B_{h}}$.
    The synchronization term is bounded by 
    \[
    k\max_{\theta}\left|n^{\top}\left(\theta\right)B_{\xi}\left(\theta\right)+B_{h}\left(\theta\right)\right|^{-1}
    \]
    and can be made arbitrarily small by selecting a sufficiently small
    $k>0$. Selecting $k=0$ reduces the controller to standard orbital
    stabilization. This allows a tunable trade-off between orbit tracking
    performance and the rate of lateness decay.

  \section{EXPERIMENTAL RESULT}
  \label{sec:Experimental-Result}

    The experimental platform is the Butterfly robot~\cite{Lynch-1998},
    a planar mechanical system consisting of a figure-eight--shaped link
    mounted to a fixed base through a single revolute joint. The link
    rotates in the vertical plane and is driven by a DC motor. A small
    spherical ball is placed on the link and can roll passively along
    its surface without direct actuation. The link's angular position
    is measured by an encoder, while a vision system estimates the ball's
    position in real time. This apparatus is used to validate nonprehensile
    manipulation algorithms, including tasks such as transporting the
    ball between prescribed equilibrium points and sustaining stable periodic
    motions of the ball~\cite{Lynch-1999}.

    Assuming ideal rolling conditions and perfect spherical symmetry of
    the ball, together with the mechanism's rotational symmetry, the configuration
    space is topologically a torus. We parameterize the configuration
    by two angles, $\vartheta$ and $\varphi$. The angle $\vartheta$
    denotes the orientation of the link relative to the base, and $\varphi$
    specifies the ball's position in a polar coordinate frame attached
    to the link (Fig.~\ref{fig:butterfly-gen-coords}). The robot dynamics
    are given by the Euler-Lagrange equations in the form 
    \[
    M\left(q\right)\ddot{q}+C\left(q,\dot{q}\right)\dot{q}+G\left(q\right)=Bu,\quad q\equiv\left(\vartheta,\varphi\right),\quad u\in\mathbb{R}
    \]
    where $u\in\mathbb{R}$ is the torque applied to the link's revolute
    joint. The matrix $M\left(q\right)\in\mathbb{R}^{2\times2}$ is invertible,
    the matrix $B\equiv\left[1,0\right]^{\top}$. The exact expressions
    for the matrix functions $M\left(q\right),$ $C\left(q\right),$ $G\left(q\right)$
    and the physical parameters are given in~\cite{Surov-2015}.

    \begin{figure}
    \begin{centering}
    \includegraphics[width=8.5cm]{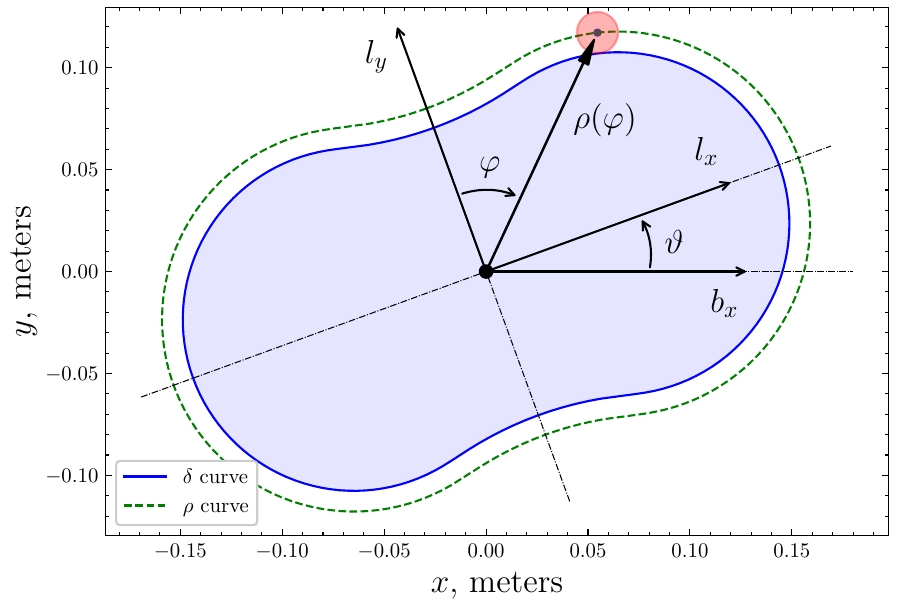}
    \par\end{centering}
    \caption{ Generalized coordinates of the Butterfly robot. The angle $\vartheta$
    defines the orientation of the link, while the angle $\varphi$ specifies
    the position of the ball.}\label{fig:butterfly-gen-coords}
    \end{figure}

    Because $M\left(q\right)$ is invertible, the dynamics can also be
    written in the control-affine form~(\ref{eq:affine-system}) with
    $x\equiv\left(\vartheta,\varphi,\dot{\vartheta},\dot{\varphi}\right)^{\top}$
    and
    \begin{align}
    f\left(x\right) & =\left(\begin{array}{c}
    \dot{q}\\
    -M^{-1}\left(q\right)\left(C\left(q,\dot{q}\right)\dot{q}+G\left(q\right)\right)
    \end{array}\right),\label{eq:butterfly-dyn}\\
    g\left(x\right) & =\left(\begin{array}{c}
    0\\
    M^{-1}\left(q\right)B\left(q\right)
    \end{array}\right).\nonumber 
    \end{align}
    As demonstrated in~\cite{Surov-2015}, the robot can execute one-directional
    periodic motions, in which the ball continuously rolls along the link
    in the same direction and the system returns to its initial configuration
    due to the topological properties of the configuration space. After
    one period, both the link orientation and the ball position advance
    by $2\pi$. We use this periodic trajectory, see Fig.~\ref{fig:butterfly-ref-traj},
    as an illustrative example to demonstrate the algorithm proposed in
    Section~\ref{sec:Sliding-Mode-Approach}.

    \begin{figure}
    \begin{centering}
    \includegraphics[width=8.5cm]{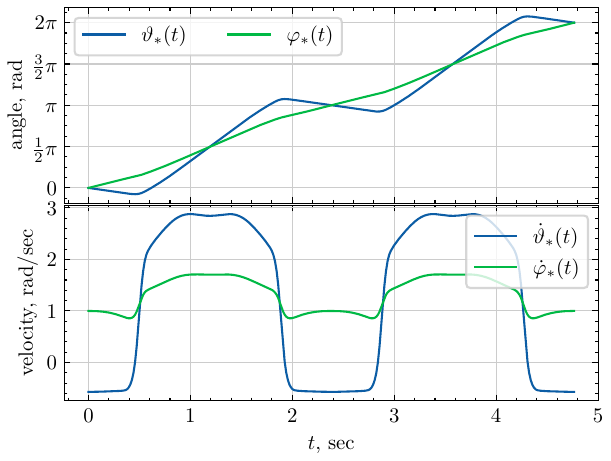}
    \par\end{centering}
    \caption{ Reference periodic trajectory of the Butterfly robot. The figure
    shows the generalized coordinates $\vartheta_{*}\left(t\right),\varphi_{*}\left(t\right)$
    and the corresponding generalized velocities $\dot{\vartheta}_{*}\left(t\right),\dot{\varphi}_{*}\left(t\right)$.}\label{fig:butterfly-ref-traj}
    \end{figure}

  \subsection{Transverse Coordinates}

    We apply the control law~(\ref{eq:sliding-nonlin-control}) to stabilize
    the trajectory $x_{*}\left(t\right)$ shown in Fig.~\ref{fig:butterfly-ref-traj}.
    To this end, let us define all the transformations. 

    As observed, the coordinate $\varphi_{*}\left(t\right)$ is strictly
    increasing, which motivates defining the projection operator
    \begin{equation}
    \theta=\pi\left(x\right)\equiv\varphi.\label{eq:butterfly-proj}
    \end{equation}
    Using this, the reference signals are reparametrized with respect
    to $\theta$ as 
    \begin{align*}
    & \vartheta_{*}\left(\theta\right)\equiv\vartheta_{*}\left(\varphi_{*}^{-1}\left(\theta\right)\right),\quad\dot{\vartheta}_{*}\left(\theta\right)\equiv\dot{\vartheta}_{*}\left(\varphi_{*}^{-1}\left(\theta\right)\right)\\
    & \dot{\varphi}_{*}\left(\theta\right)\equiv\dot{\varphi}_{*}\left(\varphi_{*}^{-1}\left(\theta\right)\right),\quad u_{*}\left(\theta\right)\equiv u_{*}\left(\varphi_{*}^{-1}\left(\theta\right)\right).
    \end{align*}
    For this trajectory the period $T$ equals $4.77$ sec, which corresponds
    $T_{\theta}=2\pi$. This parametrization allows us to define the control
    input transformation and the transverse coordinates as
    \begin{equation}
    w\equiv u-u_{*}\left(\theta\right),\quad\xi=\alpha\left(x\right)\equiv\left(\begin{array}{c}
    \vartheta-\vartheta_{*}\left(\varphi\right)\\
    \dot{\vartheta}-\dot{\vartheta}_{*}\left(\varphi\right)\\
    \dot{\varphi}-\dot{\varphi}_{*}\left(\varphi\right)
    \end{array}\right).\label{eq:butterfly-trans}
    \end{equation}
    The inverse transformation $x=\beta\left(\xi,\theta\right)$ can be
    expressed in a straightforward manner.

    The linearization of the extended transverse dynamics~(\ref{eq:aug-lin-transdyn})
    was performed following the formulas~(\ref{eq:new_f_g},\ref{eq:aug-nonlin-transdyn}),
    using the robot dynamics~(\ref{eq:butterfly-dyn}) and the transformations~(\ref{eq:butterfly-proj},\ref{eq:butterfly-trans}).
    Direct computation of the controllability Gramian over the interval
    $\left[0,2\pi\right]$ confirms that the linear system is controllable
    throughout the interval. This fact allows the construction of an LQR
    for the subsystem $\left(A_{\xi}\left(\theta\right),B_{\xi}\left(\theta\right)\right)$
    with weighting matrices $Q=\mathrm{diag\left(20,1,1\right)}$ and
    $R=2\cdot10^{4}$. The stabilizing feedback gains $K\left(\theta\right)$
    are found using the LMI approach~\cite{Gusev-2010}. 

    The next step of the algorithm is the construction of the synchronization
    term, which involves computing the periodic solution $n\left(\theta\right)$
    of equation (\ref{eq:sliding-surf-vec}). To obtain this solution,
    we note that reversing the argument, $\theta\mapsto-\theta$, renders
    equation (\ref{eq:sliding-surf-vec}) exponentially stable. This implies
    that all solutions of the modified equation converge to the periodic
    function $n\left(-\theta\right)$. Consequently, the function $n\left(\theta\right)$
    can be computed by numerically integrating the modified equation with
    arbitrary initial conditions over several periods, until the resulting
    function $n\left(-\theta\right)$ becomes effectively periodic. Upon
    completing this step of the algorithm, all the functions required
    to construct the feedback law~(\ref{eq:sliding-nonlin-control})
    are available.

  \subsection{Asymptotic Stabilization of the Robot Trajectory}

    The algorithm described above was tested on the physical robot. Fig.~\ref{fig:lateness-converge}
    shows the evolution of the lateness $h$, the transverse coordinates
    $\xi$ and the rate of projection variable $\dot{\theta}$ as functions of physical time $t$. 
    As observed, the lateness converges toward zero at an approximately constant rate,
    despite an initial lateness of $15$\,s, which is large relative to
    the trajectory period. The coordinates $\xi$ remain bounded throughout
    the experiment. They do not completely vanish due to external disturbances,
    such as unmodeled dissipative forces arising from non-ideal rolling.
    Furthermore, the signals received by the control system from the vision
    system are subject to unknown delays, which are approximately $15$
    ms but may vary. The graph of $\dot{\theta}$ shows that the projection
    parameter's rate increases above the nominal value when it lags behind
    the reference trajectory and decreases as it catches up.

    Fig.~\ref{fig:phase-converg} illustrates the convergence of the
    deviation $x-x_{*}\left(t\right)$. All components converge close
    to the origin, remaining bounded within a small range due to unmodeled
    dynamics and sensor noise, confirming the practical asymptotic stability
    of the closed-loop system.

    \begin{figure}
    \begin{centering}
      \includegraphics[width=8.5cm]{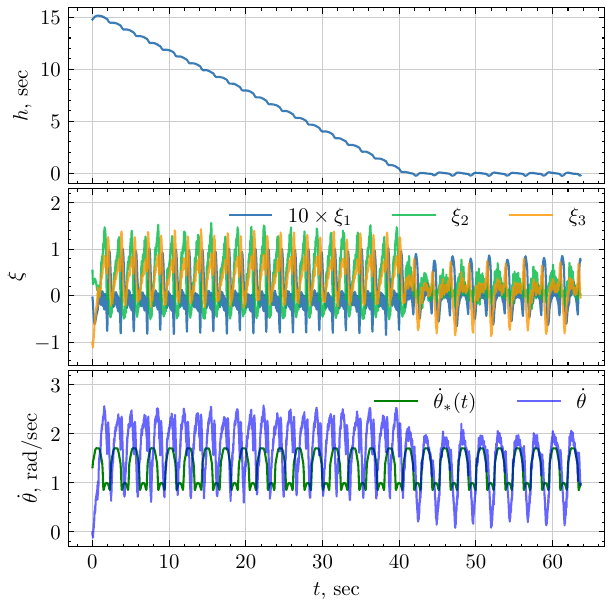}
      \par\end{centering}
      \caption{
        Lateness of the robot $h$, transverse coordinates $\xi$ and the rate
        of change of the projection variable $\dot{\theta}$.
      }
      \label{fig:lateness-converge}
    \end{figure}

    \begin{figure}
      \begin{centering}
      \includegraphics[width=8.5cm]{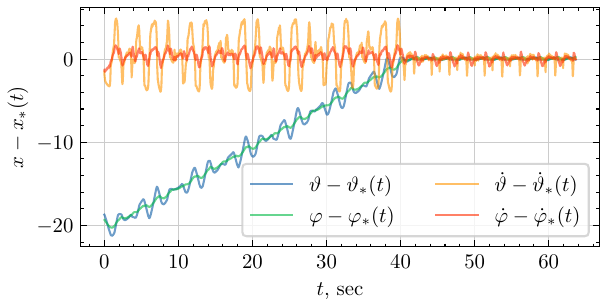}
      \par\end{centering}
      \caption{ Transient behavior of the tracking error $x-x_{*}\left(t\right)$.}
      \label{fig:phase-converg}
    \end{figure}

    The control law~(\ref{eq:sliding-nonlin-control}) can also be applied
    to synchronize a group of robots provided they share a common
    clock. Experimental results for the synchronization of two robots
    are shown in Fig.~\ref{fig:sync-all-to-global-time}. Immediately
    after launch the first robot lags behind the schedule, while the second
    robot runs ahead of it. The robots then converge to the same scheduled
    trajectory $x_{*}\left(t\right)$. Once synchronization has been achieved,
    only the synchronization component of the control is manually disabled,
    while the orbital stabilization remains active. As a result, the robots
    continue to move on their stable orbital trajectories but no longer
    maintain time synchronization and gradually drift apart.

    \begin{figure}
    \begin{centering}
    \includegraphics[width=8.5cm]{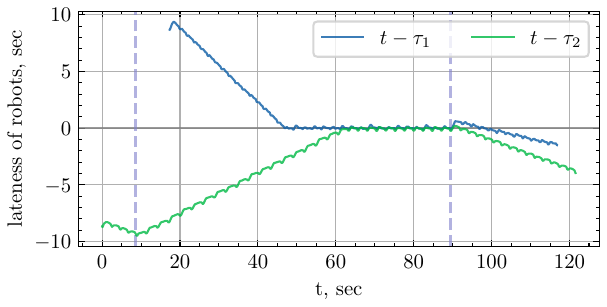}
    \par\end{centering}
    \caption{ Synchronization of two robots with respect to physical time. Dashed
    vertical lines indicate when synchronization is activated and deactivated.
    Once synchronization is turned off, the robots begin to drift apart.}\label{fig:sync-all-to-global-time}
    \end{figure}

  \subsection{Decentralized Synchronization of Six Robots}

    In the following experiments, we investigate decentralized synchronization
    schemes based on a heuristic modification of the control law~(\ref{eq:sliding-nonlin-control}).
    Specifically, we replace the reference physical time $t$ in the control
    law with a reference time $\tau_{*}$ computed from the self-times
    of other robots (either the self-time of a neighboring robot or the
    average over all active robots). Because the dynamics of $\tau_{*}$
    differ from those of the physical time $t$, this modification provides
    no formal guarantee of convergence. Nevertheless, we leverage the
    inherent robustness of the sliding-mode feedback to compensate for
    these dynamic differences. This setup allows us to experimentally
    assess whether a network of robots can achieve synchronization using
    only local time signals.

    The experiments are organized as follows. Six identical robots are
    launched sequentially with varying time intervals. After launch, the
    control system of the $i$-th robot starts publishing its self-time
    $\tau_{i}$ at 60 Hz and receives the self-times of all other active
    robots (i.e., those currently publishing). Thus, each controller maintains
    an up-to-date list of active robots with their self-times. Depending
    on the selected synchronization scheme, the controller computes the
    reference time $\tau_{*}$ and the corresponding lateness $h_{i}=\tau_{*}-\tau_{i}$
    which is then used in the control law~(\ref{eq:sliding-nonlin-control}).
    Once the robots have synchronized, they are manually switched off
    one by one; the network automatically reconfigures, and each controller
    removes inactive robots from its list. The remaining group maintains
    synchronization, highlighting the decentralized nature of the proposed
    scheme.

    We tested several synchronization schemes previously studied in the
    context of coupled harmonic oscillators~\cite{Ren-2008}. In the
    first scheme, each robot references only its immediate neighbor; the
    lateness of the $i$-th robot is then given by
    \begin{equation}
    h_{i}\equiv\tau_{1+i\,\mathrm{mod}\,N}-\tau_{i},\quad i=1..N,\label{eq:lateness-one-neighbor}
    \end{equation}
    so that the robot network forms a directed graph with a ring topology.
    The experimental results for this scheme are shown in Fig.~\ref{fig:one-neighbor-sync}.
    For each active robot, the graph plots its lateness relative to the
    average self-time of all active robots. Vertical lines indicate the
    instants at which individual robots are launched or stopped. As observed,
    the group achieves synchronization within a finite time despite sequential
    activations and deactivations.

    \begin{figure}
    \begin{centering}
    \includegraphics[width=8.5cm]{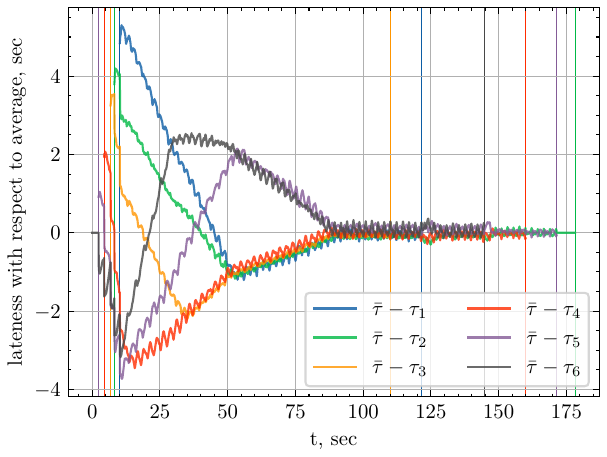}
    \par\end{centering}
    \caption{ Synchronization of six robots using neighbor-based references. Each
    robot references its immediate neighbor, forming a directed ring topology.
    The plot shows the lateness of each robot relative to the average
    of self-times of all active robots $\bar{\tau}\equiv\frac{1}{N}\sum_{i}\tau_{i}$.
    Vertical lines indicate when robots were launched or stopped. Self-times
    converge to the average, demonstrating synchronization across the
    group.}\label{fig:one-neighbor-sync}
    \end{figure}

    In the second scheme, the robots communicate over a complete directed
    graph, so that each controller has access to the self-times of all
    active robots. The reference time is computed as the average self-time,
    \[
    \bar{\tau}\equiv\frac{1}{N}\sum_{i}\tau_{i},\quad h_{i}\equiv\bar{\tau}-\tau_{i},,\quad i=1..N.
    \]
    The experimental results for this scheme are presented in Fig.~\ref{fig:avg-sync},
    and a video of the experiment is available at~\url{https://youtu.be/WtvYQWtRSiw}.

    \begin{figure}
    \begin{centering}
    \includegraphics[width=8.5cm]{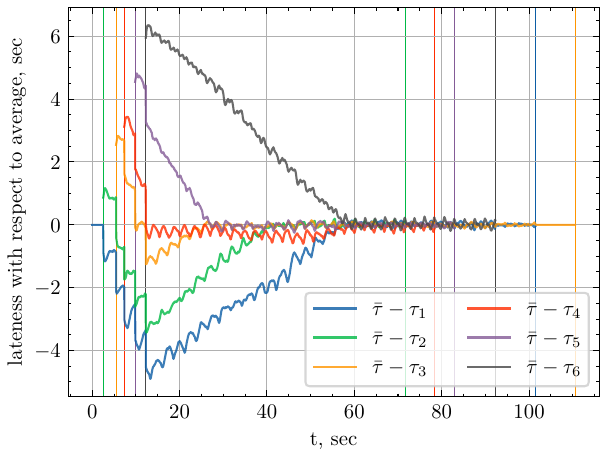}
    \par\end{centering}
    \caption{ Synchronization of six robots using average-based references. Each
    robot references the average of self-times of all active robots $\bar{\tau}\equiv\frac{1}{N}\sum_{i}\tau_{i}$,
    ensuring that all robots converge to a common time and demonstrating
    synchronization across the group.}\label{fig:avg-sync}
    \end{figure}

  \section{CONCLUDING REMARKS}
  \label{sec:Concluding-Remarks}

    We have presented a modification of the transverse--linearization
    approach that enables asymptotic stabilization of a reference periodic
    orbit. The method combines the transverse framework with a sliding--mode
    technique, providing finite-time stabilization of the time-desynchronization
    dynamics. The theoretical framework has been validated through a series
    of experiments, which demonstrate the effectiveness of the proposed
    method not only asymptotic trajectory stabilization of a single robot,
    but also for constructing decentralized control systems for a group
    of robots.

    Although the approach has been formulated for robots with an $n$-dimensional
    phase space and a single control input, it can be extended naturally
    to multi-input systems. We have omitted the proof of stability for
    the closed-loop nonlinear system under the presented control law,
    as a detailed analysis is beyond the scope of a conference paper;
    this proof will be provided in a separate publication. Additionally,
    a complete proof of stability for the proposed decentralized control
    schemes is also left for future work.

\end{document}